\documentclass[letterpaper, 10 pt, conference]{ieeeconf}  

\IEEEoverridecommandlockouts                              

\usepackage{amsmath,amssymb,amsfonts}  
\usepackage{mathtools}                 
\usepackage{siunitx}                   

\usepackage{graphicx}                  
\usepackage[caption=false,font=footnotesize]{subfig} 
\usepackage{booktabs}                  
\usepackage{tabularx}                  
\usepackage{multirow}                  
\usepackage{array}                     

\usepackage{algorithm}
\usepackage{algpseudocode}             

\usepackage{cite}                      

\usepackage{bbm}                       
\usepackage{bm}                        
\usepackage{microtype}                 
\usepackage{url}

\newtheorem{theorem}{Theorem}
\newtheorem{lemma}{Lemma}

\title{\LARGE \bf
Planar-Sector LOS Guidance for Interception of Agile Targets with Lifting-Wing Quadcopters
}

\author{Linkai Liu$^{1}$, Kun Yang$^{2}$, Han Zou$^{1}$, Chen Min$^{1}$, Shuli Lv$^{1}$, Shuai Wang$^{1}$ and Quan Quan$^{1}$
\thanks{$^{1}$ School of Automation Science and Electrical Engineering, Beihang University, Beijing, China. 
        \{link\_liu, zouhan2002, min\_chen, lvshuli, wsh\_buaa, qq\_buaa\}@buaa.edu.cn}%
\thanks{$^{2}$ Research and Development Center, China Academy of Launch Vehicle Technology, Beijing, China. 
        yangkun\_buaa@buaa.edu.cn}
}

\begin{document}

\maketitle
\nocite{BSTcontrol}
\thispagestyle{empty}
\pagestyle{empty}

\begin{abstract}

This paper proposes a Planar-Sector Line-of-Sight (PS-LOS) guidance law and an accompanying control stack for lifting-wing quadcopters, enabling robust image-based interception of agile targets. The PS-LOS relaxes conventional conical constraints, preserving maneuverability while reducing aerodynamic penalties. For perception, we employ a delay-compensated Extended Kalman Filter (EKF) to provide low-latency, continuous target estimates. The controller is tailored to lifting-wing quadcopter dynamics and includes coordinated-turn compensation. Outdoor flight experiments demonstrate interceptions against unpredictable agile targets up to 138\,m, validating the method's range and robustness.

\end{abstract}


\section{Introduction\label{sec:Introduction}}

With the rapid expansion of the Unmanned Aerial Vehicle (UAV) industry \cite{floreanoScienceTechnologyFuture2015}, misuse and threats to public safety have become increasingly prominent \cite{ritchieMicroUAVCrime2017}. This trend has created an urgent need for cost-effective, reliable anti-drone solutions, particularly for small, highly maneuverable autonomous drones \cite{Fu2019RaD-VIO}. Such platforms are often compact, communication-silent, and capable of aggressive, noncooperative maneuvers, which significantly reduce the effectiveness of conventional ground-based defenses. Consequently, there is growing interest in interceptor UAVs equipped with onboard sensing and autonomous interception capabilities that can localize, track, and engage non-cooperative targets in real-time.

Research in autonomous drone interception is mainly divided into position-based and Image-Based Visual Servo (IBVS) methods. The former, demonstrated in MBZIRC 2020 \cite{stasinchuk2021multi, vrbaAutonomousCaptureAgile2022}, relies on a perception-planning-control pipeline and requires large drones with GPS and electro-optical pods \cite{taoOptimalTerminalvelocitycontrolGuidance2022}. IBVS simplifies the loop by directly mapping image features into control commands, for instance, the work \cite{yanPreciseInterceptionFlight2025} developed a precise interception framework, highlighting the potential of IBVS for GPS-denied scenarios, while the work \cite{yangHighspeedInterceptionMulticopter2025} demonstrated reliable high-speed interceptions against both stationary and circling maneuvering targets, validating the effectiveness of monocular IBVS in dynamic flight experiments.

However, most studies rely on quadrotors \cite{vrbaAutonomousCaptureAgile2022, yangHighspeedInterceptionMulticopter2025,liThreedimensionalBearingonlyTarget2023}, without a performance gap between the target and the interceptor. Line-of-Sight (LOS) constraints are typically modeled as conic regions centered on the optical axis \cite{yangHighspeedInterceptionMulticopter2025}, which preserve visibility but restrict feasible thrust vectors. Prior validations also focus on constant-velocity or relatively regular circular trajectories \cite{liThreedimensionalBearingonlyTarget2023,yangAutonomousInterceptDrone2020, yangHighspeedInterceptionMulticopter2025}, neglecting evasive targets with unpredictable lateral or vertical accelerations. Recent work integrating IBVS with proportional navigation and FOV-holding control improved interception precision \cite{yanPreciseInterceptionFlight2025}, but still relied on quadrotors and keeps the target near the image center.

\begin{figure}[!t]
  \centering
  \includegraphics[width=0.94\columnwidth,keepaspectratio]{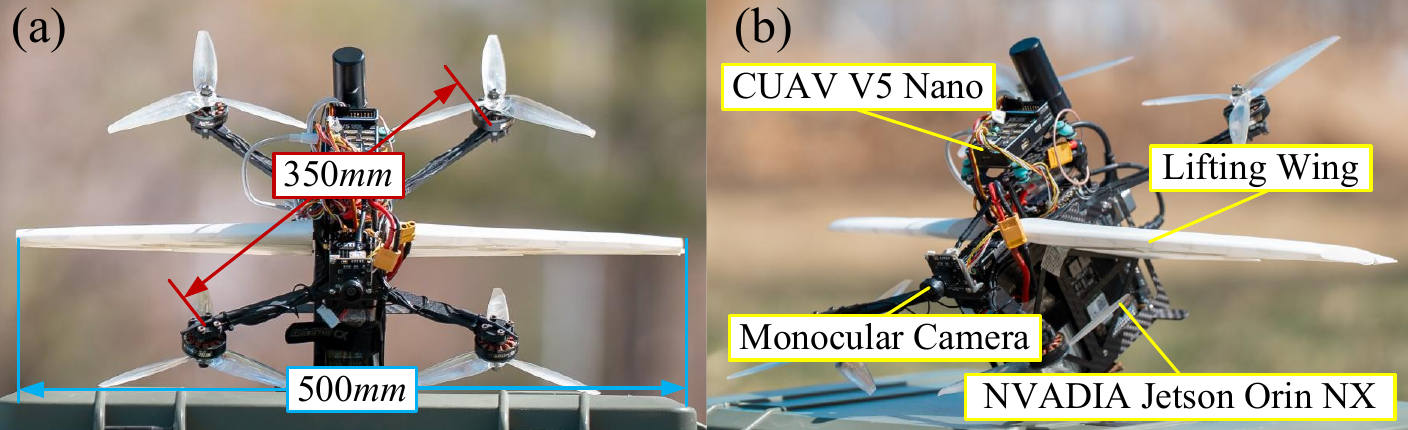}
  \caption{\label{fig:expParameters}Lift-wing quadcopter platform in experiments. (a) Wing span and distance between motors. (b) Core sensing and computing components.}
  \vspace{-0.6cm}
\end{figure}

As a hybrid configuration between conventional quadrotors and tail-sitter UAVs, lifting-wing quadcopter platforms (Fig.~\ref{fig:expParameters}) combine the advantages of both configurations. Compared with quadrotors, they achieve higher flight speeds, lower energy consumption, and substantially longer range \cite{quanLiftingwingQuadcopterModeling2025,zhangPerformanceEvaluationDesign2022}; compared with fixed-wing aircraft, they offer shorter takeoff/landing distances and greater maneuverability; compared with tail-sitters, they provide more agile lateral control and a simpler, more reliable launch and landing procedure \cite{zhangPerformanceEvaluationDesign2022}. Moreover, the work \cite{gaoDodgingLikeABird2023} demonstrated that such platforms can perform aggressive inverted dive maneuvers with stable control and recovery. Consequently, deploying a lifting-wing quadcopter as the interceptor preserves reliability and low deployment complexity while enabling high control agility, faster closing speeds, and a larger interception envelope—often yielding a performance overmatch against quadrotor targets.

Building upon previous IBVS work on quadrotors \cite{yangHighspeedInterceptionMulticopter2025}, we consider a more challenging scenario: replace quadcopters with a quadcopter-surpassing lifting-wing quadcopter \cite{quanLiftingwingQuadcopterModeling2025} with a fixed monocular camera (Fig.~\ref{fig:expParameters}) to achieve longer-range interception of targets exhibiting uncertain lateral and vertical accelerations. This motivates three primary research challenges:
\vspace{-0.01cm}
\begin{enumerate}
\item Restrictive LOS Constraints: FOV-centered conic designs overly restrict the feasible net-acceleration envelope during aggressive maneuvers.
\item Platform Limitation: Quadcopter-based studies cannot exploit the aerodynamic efficiency of lifting-wing quadcopters.
\item Unrealistic Target Maneuvers: Existing validations rarely involve evasive targets with significant lateral or vertical accelerations.
\end{enumerate}

\begin{figure}[htbp]
  \centering
  \includegraphics[width=1.0\columnwidth,keepaspectratio]{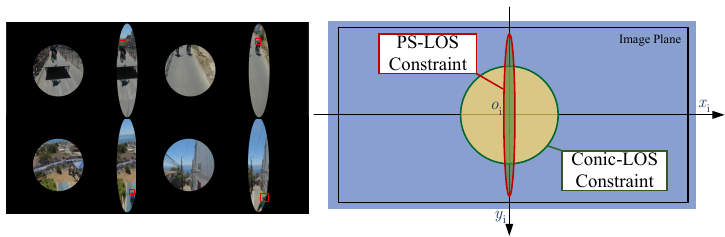}
  \caption{\label{fig:CompareConstraint}The constraint region of two LOS configurations.}
  \vspace{-0.5cm}
\end{figure}

Inspired by the practices of FPV pilots in aggressive visual tracking\footnote{World's Best FPV Drone Shot?, Red Bull, 2022. Available at: \url{https://youtu.be/13OtZFWdhwQ?si=zKxB967XPjZH8pgI}} (Fig.~\ref{fig:CompareConstraint}), we propose constraining the LOS to a planar sector aligned with the camera symmetry plane—tight along the image’s horizontal axis but relaxed along the vertical axis. This strategy preserves target visibility while ensuring sufficient maneuvering authority for lifting-wing quadcopters. Moreover, it directly addresses Challenge~1 (FOV-induced thrust and acceleration limits) and Challenge~3 (irregular lateral and vertical target accelerations), thereby guiding the subsequent control–estimation design.

The main contributions of this work are summarized as follows:
\begin{itemize}
    \item A novel \textbf{Planar-Sector Line-of-Sight (PS-LOS) guidance law} that enlarges the feasible acceleration set, and ensures target visibility during agile maneuvers.
    \item \textbf{System implementation on a lifting-wing quadcopter}, featuring a two-layer controller with coordinated-turn compensation, along with a formal proof of closed-loop stability.
    \item \textbf{Experimental validation on unpredictably agile targets}: outdoor interceptions under irregular lateral and vertical accelerations, achieving successful interceptions up to \textbf{138\,m}, surpassing quadrotor IBVS baselines in range and robustness.
\end{itemize}

Together, these contributions form a unified framework for robustly intercepting  unpredictably agile targets.

\section{Model and Problem Formulation\label{sec:modeling}}

This section formalizes the models and problem statement that constitute the basis of the proposed interception framework, including the coordinate frames, the lifting-wing quadcopter dynamics, the target kinematics, the camera imaging model and the definition of the PS-LOS constraint.

\begin{figure}[htbp]
  \centering
  \includegraphics[width=1.0\columnwidth,keepaspectratio]{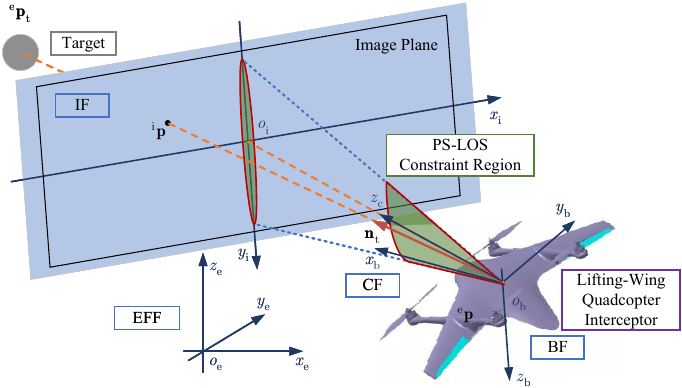}
  \caption{\label{fig:InterceptionCoordinate}Description of the interception problem. The green areas with the red border in IF and CF mark the PS-LOS constraint region designed in this article.}
  \vspace{-0.4cm}
\end{figure}

\subsection{Coordinate Systems and Models}
\subsubsection{Coordinate Systems}

For the vision-based lifting-wing quadcopter interception task, we employ six right-handed coordinate frames, illustrated in Figs.~\ref{fig:InterceptionCoordinate} and \ref{fig:LwCoordinate}:
\begin{itemize}
    \item Earth-fixed frame (EFF) $\{\mathrm{e}\}$: inertial reference for global positions and velocities.
    \item Body frame (BF) $\{\mathrm{b}\}$: quadcopter body-fixed frame describing the aircraft attitude.
    \item Lifting-wing frame (LWF) $\{\mathrm{l}\}$: frame attached to the lifting wing.
    \item Wind frame (WF) $\{\mathrm{w}\}$: local airflow frame used for aerodynamic quantities.
    \item Camera frame (CF) $\{\mathrm{c}\}$: camera pose relative to the vehicle.
    \item Image frame (IF) $\{\mathrm{i}\}$: 2-D image coordinates in the sensor plane.
\end{itemize}
Vectors expressed in frame $\mathcal{F}\in\{\mathrm{e},\mathrm{b},\mathrm{l},\mathrm{w},\mathrm{c},\mathrm{i}\}$ are denoted by the superscript ${}^{\mathcal{F}}\mathbf r$. The rotation matrix $\mathbf{R}_{\mathcal{F}_1}^{\mathcal{F}_2}\in\mathrm{SO}(3)$ denotes the orientation of frame $\mathcal{F}_2$ with respect to frame $\mathcal{F}_1$ (i.e., it rotates vectors from $\mathcal{F}_2$ to $\mathcal{F}_1$).

\begin{figure}[htbp]
  \centering
  \includegraphics[width=1.0\columnwidth,keepaspectratio]{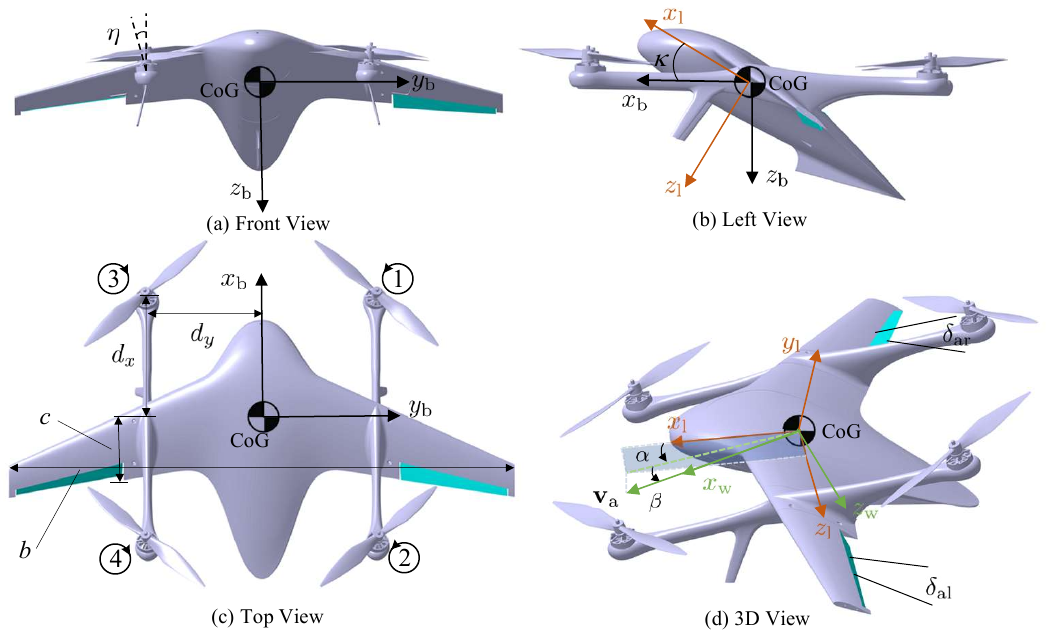}
  \caption{\label{fig:LwCoordinate}Configuration and key properties of the lifting-wing quadcopter\cite{quanLiftingwingQuadcopterModeling2025}. Properties shown: CoG: center of gravity; $\eta$: motor lateral tilt; $b$: wingspan; $c$: mean chord; $\kappa$: wing installation angle; $d_x, d_y$: motor thrust moment arms; $v_a$: airspeed direction; $\alpha, \beta$: angle of attack and sideslip; $\delta_{ar}, \delta_{al}$: right/left aileron deflections.}
  \vspace{-0.2cm}
\end{figure}

The origin of the CF coincides with that of the BF, i.e., the translation vector ${\bf t}_{\rm c}^{\rm b} = {\bf 0}$, and the rotation matrix ${\bf R}_{\rm c}^{\rm b}$ is constant. In this work, the camera is rigidly mounted to the vehicle body. Therefore, interception is considered successful once the camera directly contacts the target.

\subsubsection{Lifting-Wing Quadcopter Model}

The lifting-wing quadcopter model formulated under the special orthogonal group ${\rm SO}(3)$, to describe its dynamics\cite{wangHeadingAdjustmentAdmittance2025}:

\begin{equation}
    \label{eq:LwModel}
    \left\{ \begin{array}{l}
^{\rm{e}}{\dot {\bf{p}}} = {}^{\rm{e}}{\bf{v}},\\
^{\rm{e}}{\dot {\bf{v}}} = ^{\rm{e}}\!\!{\bf{a}} = \frac{1}{m}{^{\rm{e}}\bf{f}},\\
{\dot {\bf{R}}}_{\rm{b}}^{\rm{e}} = {\bf{R}}_{\rm{b}}^{\rm{e}}{\left[ {^{\rm{b}}{\bf{\omega }}} \right]_ \times },\\
\mathbf{J} \, {^{\rm b}\dot{\boldsymbol{\omega}}} = - {^{\rm b}\boldsymbol{\omega}} \times \mathbf{J} \, {^{\rm b}\boldsymbol{\omega}} + {^{\rm b}\mathbf{m}}
\end{array} \right.
\end{equation}
where \( ^{\rm{e}}{\bf{p}} \), \( ^{\rm{e}}{\bf{v}} \) and $^{\rm{e}}{\bf{a}}$ are the position,  velocity and acceleration of the aircraft in the EFF, $\mathbf{J}$ and ${}^{\mathrm{b}}\boldsymbol{\omega}$ are the inertia matrix and angular velocity of the aircraft in the BF, \( m \) is the mass of the aircraft and \( [^{\rm b} \omega]_{\times} \) denotes the skew-symmetric matrix. 
The external forces and moments are given by
\begin{equation}
\left\{
\begin{array}{l}
{}^{\mathrm{e}}\mathbf{f} = \mathbf{R}_{\mathrm{b}}^{\rm e} \cdot {}^{\mathrm{b}}\mathbf{f}_{\mathrm{r}} 
+ \mathbf{R}_{\mathrm{l}}^{\mathrm{e}} \cdot {}^{\mathrm{l}}\mathbf{f}_{\mathrm{a}} 
+ {}^{\mathrm{e}}\mathbf{f}_{g}, \\[4pt]
{}^{\mathrm{b}}\mathbf{m} = {}^{\mathrm{b}}\mathbf{m}_{\mathrm{r}} 
+ \mathbf{R}_{\mathrm{l}}^{\mathrm{b}} \cdot {}^{\mathrm{l}}\mathbf{m}_{\mathrm{a}}.
\end{array}
\right.
\end{equation}
Here \({}^{\mathrm{b}} \mathbf{f}_{\mathrm{r}} = [0\,\,0\,\,f_{\mathrm{r_z}}]^{\rm {T}}\) the force produced by actuators, 
\({}^{\mathrm{e}} \mathbf{f}_{g} = [0\,\,0\,\,mg]^{\rm {T}}\) is the gravitational force, and 
\({}^{\mathrm{l}}\mathbf{f}_{\mathrm{a}}\), \({}^{\mathrm{l}}\mathbf{m}_{\mathrm{a}}\) are the aerodynamic forces and moments generated by the lifting wing, modeled following \cite{wangHeadingAdjustmentAdmittance2025}.

\subsubsection{Target Model}

The target is modeled as a point mass:
\begin{equation}
    \label{equ-TargetModel}
    {}^{\rm e}\dot{\mathbf p}_{\rm t} = {}^{\rm e}\mathbf v_{\rm t}, \qquad
    {}^{\rm e}\dot{\mathbf v}_{\rm t} = {}^{\rm e}\mathbf a_{\rm t}.
\end{equation}
Define the relative states in the EFF as
\begin{equation}
    \label{equ-RelativeParameter}
    {}^{\rm e}\mathbf p_{\rm r} = {}^{\rm e}\mathbf p - {}^{\rm e}\mathbf p_{\rm t}, \quad
    {}^{\rm e}\mathbf v_{\rm r} = {}^{\rm e}\mathbf v - {}^{\rm e}\mathbf v_{\rm t}, \quad
    {}^{\rm e}\mathbf a_{\rm r} = {}^{\rm e}\mathbf a - {}^{\rm e}\mathbf a_{\rm t}.
\end{equation}
The target acceleration ${}^{\rm e}\mathbf a_{\rm t}$ is not directly measured by vision and is treated as a (bounded) disturbance in both controller and estimator design.

\subsubsection{Camera Imaging Model}

Perspective projection maps ${\mathbb R}^3$ to the image plane ${\mathbb R}^2$. Denote the image coordinate by ${}^{\rm i}\mathbf p = [\,{}^{\rm i}p_{x}\,\,{}^{\rm i}p_{y}\,]^{\rm T}$ and the relative position by $\mathbf p_{\rm r}$. The LOS unit vector in the EFF is
\begin{equation}
\label{equ-TargetVector-slim}
\mathbf n_{\rm t} \;=\; -\frac{\mathbf p_{\rm r}}{\|\mathbf p_{\rm r}\|}
\;=\; \mathbf R_{\rm b}^{\rm e}\mathbf R_{\rm c}^{\rm b}
\frac{\big[\,{}^{\rm i}p_{x}\;{}^{\rm i}p_{y}\;f_{\rm oc}\,\big]^{\rm T}}
{\big\|\big[\,{}^{\rm i}p_{x}\;{}^{\rm i}p_{y}\;f_{\rm oc}\,\big]^{\rm T}\big\|},
\end{equation}
where $f_{\rm oc}$ is the camera focal length and $\mathbf R_{\rm c}^{\rm b},\mathbf R_{\rm b}^{\rm e}$ are mounting/attitude rotations.

The sensor field of view (FOV) is defined by horizontal and vertical half-angles $\tfrac{1}{2}\alpha_{\mathrm{hfov}}$ and $\tfrac{1}{2}\alpha_{\mathrm{vfov}}$:
\begin{equation}
\label{equ-FOV}
\resizebox{\columnwidth}{!}{$
\displaystyle
\mathcal C = \big\{{}^{\rm i}\mathbf p \mid
\big|\arctan({}^{\rm i}p_{x}/f_{\rm oc})\big| \le \alpha_{\mathrm{hfov}}/2,\;
\big|\arctan({}^{\rm i}p_{y}/f_{\rm oc})\big| \le \alpha_{\mathrm{vfov}}/2
\big\}
$}
\end{equation}
and the target is observable if ${}^{\rm i}\mathbf p_{\rm t} \in\mathcal C$.

\subsubsection{IBVS Model}

The image Jacobian relates the normalized image velocity to the camera twist:
\begin{equation}
\label{equ-Ls}
{}^{\rm i}\dot{\bar{\bf p}} = {\bf L}_s({}^{\rm i}\bar{\bf p},{}^{\rm c}p_z)\;{}^{\rm c}\tilde{\bf v}, \qquad
{}^{\rm c}\tilde{\bf v}=\begin{bmatrix}{}^{\rm c}{\bf v}\\[2pt]{}^{\rm c}{\boldsymbol\omega}\end{bmatrix},
\end{equation}
with ${}^{\rm i}\bar{\bf p}=[{}^{\rm i}\bar p_x\,\,{}^{\rm i}\bar p_y]^{\rm T}=[{}^{\rm i}p_x/f_{\rm oc}\,\,{}^{\rm i}p_y/f_{\rm oc}]$. The Jacobian ${\bf L}_s \in \mathbb{R}^{2\times6}$ is formulated as \cite{yangHighspeedInterceptionMulticopter2025}.

\subsection{State-Space Representation}

The vehicle attitude is represented by a unit quaternion $\mathbf q\in\mathbb R^4$. Define the state as
\[
\mathbf x = \big[\mathbf q^{\rm T}\;\mathbf p_{\rm r}^{\rm T}\;\mathbf v_{\rm r}^{\rm T}\;{}^{\rm i}\bar{\mathbf p}^{\rm T}\;\mathbf b_{\rm gyr}^{\rm T}\;\mathbf b_{\rm acc}^{\rm T}\big]^{\rm T}\in\mathbb R^{18},
\]
comprising attitude, relative position and velocity (Eq.~\eqref{equ-stateTrans-slim}), normalized image feature, and IMU biases.

The discrete-time state update used in the estimator is compactly written as
\begin{equation}
\label{equ-stateTrans-slim}
\hat{\mathbf x}_{k+1} = \mathbf f(\mathbf x_k,\mathbf u_k,\mathbf w_k),
\end{equation}
where $\mathbf u_k=[\boldsymbol\omega_{{\rm gyr},k}^{\rm T}\,\;\mathbf a_{{\rm acc},k}^{\rm T}]^{\rm T}$ (measured IMU inputs) and process noise
\begin{equation}
        \label{equ-ProcessNoise}
     \mathbf w_k \sim \mathcal{N}(\mathbf 0,\mathbf Q_k),\qquad
\mathbf Q_k=\operatorname{diag}(\mathbf Q_\omega,\mathbf Q_a).   
\end{equation}
\begin{equation}
\mathbf w_k = \begin{bmatrix}\mathbf n_{b_{\rm gyr}}^{\rm T} & \mathbf n_{b_{\rm acc}}^{\rm T}\end{bmatrix}^{\rm T}.
\end{equation}

IMU biases are modeled as random-walk (Wiener) processes, and measurements are corrected by subtracting the estimated bias (compact form):
\[
\begin{aligned}
{}^{\rm e}\boldsymbol\omega &= \mathbf R_{\rm b}^{\rm e}(\boldsymbol\omega_{\rm gyr}-\mathbf b_{\rm gyr}), \qquad
{}^{\rm e}\mathbf a &= \mathbf R_{\rm b}^{\rm e}(\mathbf a_{\rm acc}-\mathbf b_{\rm acc}),
\end{aligned}
\]
with $\dot{\mathbf b}_{\rm gyr}=\mathbf n_{\rm gyr},\ \dot{\mathbf b}_{\rm acc}=\mathbf n_{\rm acc}$, where $\mathbf{n}_{\rm gyr}$ and $\mathbf{n}_{\rm acc}$ are zero-mean Gaussian random vectors.

\subsection{PS-LOS: Theoretical Results and Motivation}
\label{sec-ControlStrategy}

\subsubsection{Motivation}
In the previous work \cite{yangHighspeedInterceptionMulticopter2025}, the LOS constraint is enforced as a conical region about the camera optical axis with an apex angle $\alpha_{\rm cone}$ of roughly $40^\circ$:
\begin{equation}
\label{equ-cone-set}
\mathcal{S}_{\mathrm{cone}} = 
\Bigl\{ \mathbf{n} \in \mathbb S^2 \;\Big|\; 
\cos(\alpha_{\mathrm{cone}}) < \bigl| \mathbf{n}_{\rm d}^{\mathrm{T}} \mathbf{n} \bigr| \le 1 
\Bigr\},
\end{equation}
where $\mathbb S^2$ denotes the two-dimensional manifold of unit vectors in $\mathbb R^3$ and $\mathbf{n}_{\rm d}$ is the desired LOS direction.
While this cone keeps the target near the image center and preserves some maneuverability, it substantially restricts the actuator force envelope---reducing the available maximum thrust by up to approximately 50\% for a large subset of LOS directions, i.e., the directions most required during aggressive target maneuvers (see Fig.~\ref{fig:CompareThrustBall}(a)). 

Empirical observations of elite FPV pilots show a consistent strategy: when pursuing agile targets—accelerating, decelerating, climbing, descending, or turning—they keep the target locked near the FOV centerline, enforcingtight alignment along the image’s horizontal axis while allowing substantial deviation along the vertical axis (Fig.~\ref{fig:CompareConstraint}). However, the conventional conic constraint used in interception control defined in \cite{yangHighspeedInterceptionMulticopter2025}, which doesn't capture this asymmetric tolerance, thus limiting maneuvering freedom near the FOV edges.

\noindent\textbf{Definition 1 (PS-LOS).}
The PS-LOS is defined in CF as
\begin{equation}
\label{equ-PSLOS}
\resizebox{\columnwidth}{!}{$
\displaystyle
\mathcal{S}_{\mathrm{PS}}=\Big\{\mathbf n\in\mathbb S^2 \;\Big|\;
\big|\mathbf n_{\rm hd}^{\mathrm T}\mathbf n\big|<\sin (\alpha_{\mathrm{lon}}),\;\;
\big|\mathbf n_{\rm vd}^{\mathrm T}\mathbf n\big|\le\sin(\alpha_{\mathrm{lat}})
\Big\},
$}
\end{equation}
where $\mathbf n$ is the LOS unit vector, and the two vectors are defined as $\mathbf{n}_{\rm hd} = [0\,\, 1\,\, 0]^{\rm T}\in\mathbb S^2$ and $\mathbf{n}_{\rm vd} = [1\,\, 0\,\, 0]^{\rm T}\in\mathbb S^2$ in CF. The horizontal constraint $\alpha_{\mathrm{lon}}=\alpha_{\mathrm{hfov}}/2-\alpha_{\mathrm{safe}}>\alpha_{\mathrm{cone}}$ (relaxed toward the horizontal FOV edges), while The vertical constraint $\alpha_{\mathrm{lat}}\le\varepsilon$ ($\varepsilon\!\to\!0$). In IF the PS-LOS constraint region shows in Fig.~\ref{fig:InterceptionCoordinate}.


\noindent\textbf{Remark 1.} The role of the LOS constraint region is to keep the target features within this region throughout the interception process; it acts as a limitation on the controller \textbf{rather than a restriction on the visual perception range}.

\begin{figure}[htbp]
  \centering
  \includegraphics[width=1.0\columnwidth,keepaspectratio]{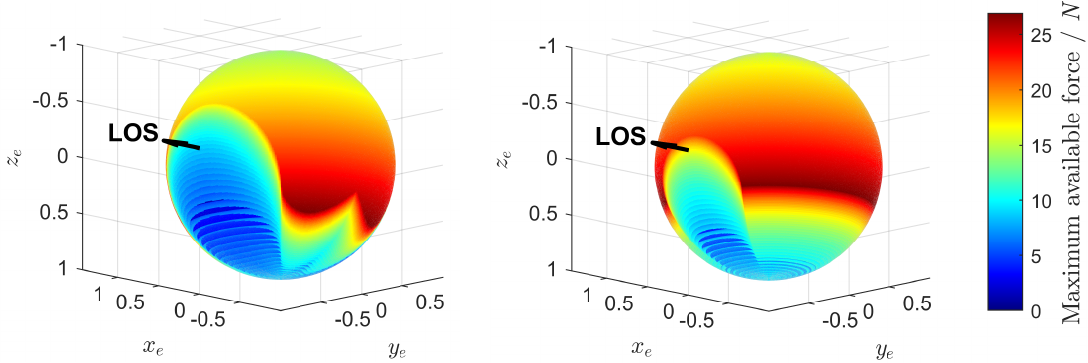}
  \caption{\label{fig:CompareThrustBall}Comparison of maximum available net force ${}^{\mathrm{e}}\mathbf{f}$ for a lifting-wing quadcopter (maximum thrust 25\,N, mass 1\,kg) under two LOS constraints. Each point on the sphere is color-coded by the magnitude of the maximum achievable ${}^{\mathrm{e}}\mathbf{f}$ in that direction. (a) LOS constrained to a cone about the camera optical axis of $40^\circ$. (b) LOS constrained to a planar sector confined to the camera symmetry plane (horizontal constraint relaxed toward the FOV edges of $55^\circ$, lateral constraint tightened).}
  \vspace{-0.5cm}
\end{figure}

\subsubsection{Enhanced Agility through PS-LOS}

A key insight of this work is that only two attitude angles \textemdash{roll $\phi$ and pitch $\theta$} are sufficient to steer the acceleration toward any desired direction, while maintaining LOS of the target in the Planer-Sector region (which imposes instantaneous yaw limits). This observation forms the theoretical foundation of our PS-LOS design.

\begin{lemma}[cf.~\cite{junkins1993geometry}]
Let $\mathbf{e}_{\rm t}=[0\,\,0\,\,-1]^{\rm T}$ and let $R_x(\phi)$, $ R_y(\theta)\in {\rm SO}(3)$ denote rotations about the body $x$- and $y$-axes respectively. 
For any unit vector $\mathbf{t}=[t_x\,\,t_y\,\,t_z]^{\rm T}\in\mathbb{R}^3$ there exist angles $\theta,\phi$ (except the degenerate case $t_x=\pm1$) such that
\[
\mathbf{t}=R_x(\phi)R_y(\theta)\,\mathbf{e}_z.
\]
One convenient choice is
\[
\theta=-\arcsin(t_x),\qquad \phi=\operatorname{atan2}(t_y,-t_z),
\]
with the usual convention for \(\operatorname{atan2}\). 
\end{lemma}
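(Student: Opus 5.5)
The plan is a direct constructive verification. I read the vector $\mathbf{e}_z$ in the displayed identity as the thrust axis $\mathbf{e}_{\rm t}=[0\,\,0\,\,-1]^{\rm T}$ introduced at the start of the statement; this is the only reading under which the stated formulas $\theta=-\arcsin(t_x)$ and $\phi=\operatorname{atan2}(t_y,-t_z)$ are consistent. Under that reading, I would compute the product $R_x(\phi)R_y(\theta)\mathbf{e}_{\rm t}$ in closed form and then solve componentwise for $\theta$ and $\phi$.

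First I fix the standard elementary rotation matrices. I take $R_y(\theta)$ with first row $[\cos\theta\,\,0\,\,\sin\theta]$ and third row $[-\sin\theta\,\,0\,\,\cos\theta]$, and $R_x(\phi)$ acting on the $(y,z)$ block as $\begin{bmatrix}\cos\phi & -\sin\phi\\ \sin\phi & \cos\phi\end{bmatrix}$. Then $R_y(\theta)\mathbf{e}_{\rm t}=[-\sin\theta\,\,0\,\,-\cos\theta]^{\rm T}$. Applying $R_x(\phi)$ leaves the $x$-component unchanged and rotates the $(0,-\cos\theta)$ pair, which gives
\[
R_x(\phi)R_y(\theta)\mathbf{e}_{\rm t}=\begin{bmatrix}-\sin\theta\\ \sin\phi\cos\theta\\ -\cos\phi\cos\theta\end{bmatrix}.
\]

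Next I match this vector to $\mathbf{t}$. The first component forces $\sin\theta=-t_x$, and I choose $\theta=-\arcsin(t_x)\in[-\pi/2,\pi/2]$, so that $\cos\theta=\sqrt{1-t_x^2}\ge 0$. The remaining two components require $(\sin\phi,\cos\phi)\cos\theta=(t_y,-t_z)$. Because $\|\mathbf{t}\|=1$, we have $t_y^2+t_z^2=1-t_x^2=\cos^2\theta$, so the pair $(t_y,-t_z)$ lies on a circle of radius $\cos\theta$. When $\cos\theta>0$, dividing through gives a point on the unit circle, and $\phi=\operatorname{atan2}(t_y,-t_z)$ is exactly its polar angle. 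This is where the positivity of $\cos\theta$ matters: it ensures the sign is not flipped and that $\operatorname{atan2}$, rather than $\arctan$, returns the correct quadrant.

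Finally I treat the excluded case. If $t_x=\pm1$, then $\cos\theta=0$, so $t_y=t_z=0$ and $\operatorname{atan2}(0,0)$ is undefined. This is gimbal lock: any $\phi$ works, so the angles are not unique, and this is why the statement excludes it. The only real obstacle is one of conventions. The sign of $\mathbf{e}_{\rm t}$ and the sign conventions of $R_x$ and $R_y$ must be fixed so that the stated formulas hold. I would settle this first and note that other conventions only change signs in the formulas, not the existence claim.
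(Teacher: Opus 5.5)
Your proposal is correct. The paper gives no proof of this lemma and only cites Junkins and Turner. Your componentwise computation $R_x(\phi)R_y(\theta)\mathbf e_{\rm t}=[-\sin\theta\;\;\sin\phi\cos\theta\;\;-\cos\phi\cos\theta]^{\rm T}$, followed by solving with $\cos\theta=\sqrt{1-t_x^2}>0$, is the standard verification that the citation stands in for.

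Reading $\mathbf e_z$ as $\mathbf e_{\rm t}$ is necessary, not just convenient. Under either sign convention for the elementary rotations, the third component of $R_x(\phi)R_y(\theta)[0\;\,0\;\,1]^{\rm T}$ is $\cos\phi\cos\theta$, and the stated angles make this equal to $-t_z$. With the standard convention, the literal statement in fact yields $-\mathbf t$.

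You are also right that at $t_x=\pm1$ existence still holds, since any $\phi$ works. Only uniqueness and the $\operatorname{atan2}$ formula break down there.
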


\noindent\textbf{Remark 2.}  
\textit{Lemma 1} shows that two orthogonal attitude angles (roll $\phi$ and pitch $\theta$) suffice to align the thrust vector $\mathbf{e}_{\rm t}$ with any target direction satisfying $t_x \neq \pm 1$. This geometric result provides a rigorous foundation for the subsequent PS-LOS guidance law.

We note that, due to the limitation of FOV (\ref{equ-FOV}), the LOS is constrained within a sector rather than completely circular along the body symmetry plane with a fixed angular span. Within this sector, although the allowable $\theta$ range limits the direct orientation of the thrust vector ${\mathbf{f}_{\mathrm{r}}}$, the combined effect of $\mathbf{f}_{g}$, and $\mathbf{m}_{\mathrm{a}}$ allows the adequate net force ${}^{\mathrm{e}}\mathbf{f}$ acting on the vehicle to cover a sufficiently large range of translational accelerations (see Fig.~\ref{fig:CompareThrustBall}(b)) for typical interception tasks (see Fig.~\ref{fig:ControlStrategy}) (in directions that thrust cannot directly achieve, the available maximum ${}^{\mathrm{e}}\mathbf{f}$ is reduced to different extents but remains effective through PS-LOS).

\begin{figure}[htbp]
  \centering
  \includegraphics[width=1.0\columnwidth,keepaspectratio]{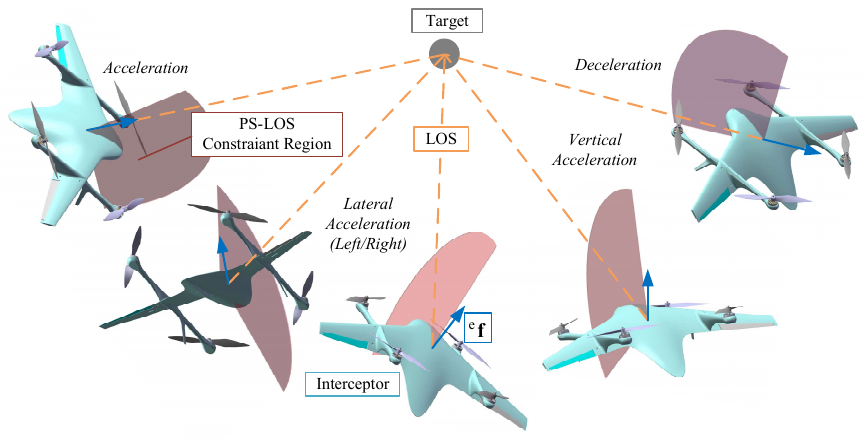}
  \caption{\label{fig:ControlStrategy}%
  Maneuverability of lifting-wing quadcopters under PS-LOS constraints.}
  \vspace{-0.2cm}
\end{figure}

\subsubsection{Theoretical Benefit}

The proposed PS-LOS brings two main theoretical benefits:

\textbf{Stronger bearing regulation and agility, robust to disturbances.} 
By tightening the vertical bound while relaxing the horizontal bound of the LOS region (see Fig.~\ref{fig:CompareConstraint}), the PS-LOS enforces precise bearing control and enhances agility along the LOS direction; meanwhile, stronger corrections along the tangent of the LOS manifold improve robustness against external perturbations (e.g., wind gusts), enabling persistent, low-latency tracking under aggressive target maneuvers.

\textbf{Aerodynamic efficiency on lifting-wing platforms.} 
For lifting-wing quadcopter dynamics, aerodynamic loads and actuator forces concentrate near the vehicle symmetry plane (sideslip maneuvers are inefficient and energy-wasting) \cite{quanLiftingwingQuadcopterModeling2025}. The PS-LOS confines the required maneuvering forces within the symmetry plane (see Fig.~\ref{fig:ControlStrategy}), reducing aerodynamic penalties while preserving maneuverability.

\subsubsection{Model for Control}
Modeling the PS-LOS constraint yields the following control errors:
\begin{equation}
    \label{equ-z_1,z_2}
    \left\{ 
    \begin{array}{l}
        z_1 = \big|\mathbf{n}_{\rm hd}^{\rm T}\mathbf{n}_{\rm t}\big| \le c_h\\[2pt]
        z_2 = \mathbf{n}_{\rm vd}^{\rm T}\mathbf{n}_{\rm t}
    \end{array} 
    \right.,
\end{equation}
where $\mathbf{n}_{\rm t}$ is the LOS unit vector defined in \eqref{equ-TargetVector-slim}. The constant ${c_h} = \sin (\alpha_{\mathrm{lon}})$ ($\alpha_{\mathrm{lon}}$ defined as Eq.~\eqref{equ-PSLOS}).

\begin{figure*}[htbp]
  \centering
  \includegraphics[width=0.95\textwidth]{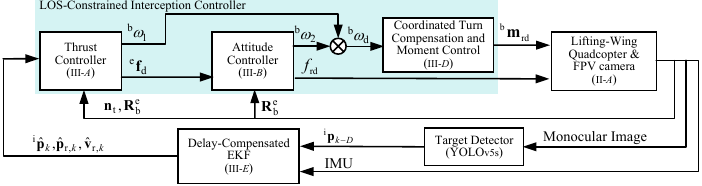}
  \caption{\label{fig:ControllerStructure}Interception controller structure.}
  \vspace{-0.6cm}
\end{figure*}

\subsection{Controller and Estimator Objectives}
\label{sec-ControllerEstimatorObjectives}

We consider a lifting-wing quadcopter interceptor pursuing an agile target with irregular lateral and vertical accelerations; let the relative states be ${}^{\rm e}\mathbf p_{\rm r},{}^{\rm e}\mathbf v_{\rm r}$ and the LOS unit vector $\mathbf n_{\rm t}$.
The interception time is $T^\star=\inf\{t\ge 0\mid \|{}^{\rm e}\mathbf p_{\rm r}(t)\|\le r_{\rm hit}\}$ with capture radius $r_{\rm hit}>0$. Assume $\|{}^{\rm e}\mathbf a_{\rm t}(t)\|\le a_{\max}$ and sufficient actuation $f_{\rm rd}(t)>mg+a_{\max}$. 

The objectives are: (i) sector invariance $\big|\mathbf{n}_{\rm hd}^{\rm T}\mathbf{n}_{\rm t}\big| \le c_h$ with $z_2(t)\to 0$; (ii) interception ${}^{\rm e}\mathbf p_{\rm r}(t)\to \mathbf 0$ (i.e., $T^\star<\infty$).

To achieve this, we employ a PS-LOS guidance law (Eq.~\eqref{equ-z_1,z_2}) with a two-layer controller (Fig.~\ref{fig:ControllerStructure}); stability follows from a composite Lyapunov analysis (cf. Eq.~\eqref{equ-L_dot}). Target-state low-latency feedback is provided by a delay-compensated EKF (Fig.~\ref{fig:EKF}).

\section{Interception Control based on Planar-Sector Guidance\label{sec:guidance}}

This section develops the interception controller leveraging the PS-LOS constraint introduced in Section~\ref{sec-ControlStrategy}. We translate the PS-LOS objectives into an outer-loop thrust/LOS law and an inner-loop attitude tracker with coordinated-turn compensation, and provide a Lyapunov-based analysis of closed-loop safety and convergence. Implementation details and pseudocode are summarized in Fig.~\ref{fig:ControllerStructure} and \textit{Algorithm~\ref{alg:ControllerFlow}}.

\subsection{Outer-Loop Thrust Control}
\label{sec-OuterLoop}

Building on the PS-LOS constraint above, we design an outer-loop thrust law that (i) enforces the PS-LOS and (ii) drives the vehicle to the target. Recall the LOS errors $z_1,z_2$ (Eq.~\eqref{equ-z_1,z_2}) and define the relative-position error ${\bf z}_3={}^{\rm e}{\bf p}_{\rm r}$ alongside with the velocity tracking error ${\bf z}_4={\bf v}_{\rm r}-{\bf v}_{\rm rd}$ (${\bf v}_{\rm rd}=-c_1{}^{\rm e}{\bf p}_{\rm r}$, $c_1>0$ is a tuning gain.). We adopt compact Lyapunov terms together with a Barrier Lyapunov Function \cite{teeBarrierLyapunovFunctions2009}:
\[
L_1=\tfrac12\log\!\frac{c_h^2}{c_h^2-z_1^2},\quad
L_2=\tfrac12 z_2^2,\quad
L_3=\tfrac12{\bf z}_3^{\rm T}{\bf z}_3,
\]
and the composite candidate 
\begingroup
\setlength{\abovedisplayskip}{4pt}
\setlength{\belowdisplayskip}{4pt}
\begin{equation}
\label{equ-L_4}
L_4 = L_1 + L_2 + \tfrac{1}{2}\mathbf z_4^{\rm T}\mathbf z_4.
\end{equation}
\endgroup

Introduce
$K_h={z_1}/{(c_h^2-z_1^2)}$ ($c_h$ defined as Eq.~\eqref{equ-z_1,z_2}), $K_v=z_2$,
which appear naturally in the time-derivatives of $L_1$ and $L_2$ and serve as barrier gains.

Design an implementable outer-loop command that yields $\dot L_4<0$ is
\begin{equation}
\begin{aligned}
{}^{\rm e}{\bf f}_{\rm rd} =& -\mathbf{R}_{\rm l}^{\rm e}\,{}^{\rm l}{\bf f}_{\rm a} - {}^{\rm e}{\bf f}_g - m\Big(-c_1{}^{\rm e}{\bf v}_{\rm r} - c_2{\bf z}_4 - {}^{\rm e}{\bf p}_{\rm r} \\
& - \frac{K_h}{\|{\bf p}_{\rm r}\|}\big(I-{\bf n}_{\rm t}{\bf n}_{\rm t}^{\rm T}\big){\bf n}_{\rm hd} - \frac{K_v}{\|{\bf p}_{\rm r}\|}\big(I-{\bf n}_{\rm t}{\bf n}_{\rm t}^{\rm T}\big){\bf n}_{\rm vd}\Big),
\end{aligned}
\label{equ-alpha1}
\end{equation}
alongside with the LOS-convergence angular-rate command
\begin{equation}
{}^{\rm b}{\boldsymbol\omega}_1 = c_\omega\,{{\bf R}_{\rm b}^{\rm e}}^{\rm T}\!\Big(K_h({\bf n}_{\rm t}\times{\bf n}_{\rm hd})+K_v({\bf n}_{\rm t}\times{\bf n}_{\rm vd})\Big).
\label{equ-omega_1}
\end{equation}

This desired angular velocity component term is used for LOS convergence, where $c_2,c_\omega>0$ are other two tuning gains. These gains ($c_1,c_2,c_\omega$) can be freely adjusted according to different control requirements, and in our experiments, they were varied within the range of $0.3\sim2.0$. Intuitively, the terms $(I-{\bf n}_{\rm t}{\bf n}_{\rm t}^{\rm T})$ produce corrections tangent to the LOS manifold and realize the barrier effect on $z_1,z_2$, while the remaining terms regulate approach rate and damping. The outer loop, therefore, preserves the LOS sector and produces the desired thrust direction/magnitude for the inner-loop attitude controller to track.

\subsection{Inner-Loop Attitude Control}
\label{sec-InnerLoop}

The inner loop tracks the outer-loop thrust command. Let
\begin{equation}
        \mathbf n_{\rm fd} = \frac{{}^{\rm e}\mathbf f_{\rm rd}}{\|{}^{\rm e}\mathbf f_{\rm rd}\|},
\end{equation}
and choose $\mathbf R_{\rm d}$ to align the body thrust axis $\mathbf n_{\rm f}=\mathbf R_{\rm b}^{\rm e}\mathbf e_3$ ($e_3=[0\,\,0\,\,1]^{\rm T}$) with $\mathbf n_{\rm fd}$. A Rodrigues construction is:
\begingroup
\setlength{\abovedisplayskip}{4pt}
\setlength{\belowdisplayskip}{4pt}
\setlength{\abovedisplayshortskip}{3pt}
\setlength{\belowdisplayshortskip}{3pt}
\[
\mathbf r = \frac{\mathbf n_{\rm f}\times\mathbf n_{\rm fd}}{\|\mathbf n_{\rm f}\times\mathbf n_{\rm fd}\|},\quad
\phi_r = \arccos(\mathbf n_{\rm f}^{\rm T}\mathbf n_{\rm fd}).
\]
Then $\mathbf R_{\rm tilt}= \mathbf I+\sin\phi_r[\mathbf r]_\times+(1-\cos\phi_r)[\mathbf r]_\times^2$ and $\mathbf R_{\rm d}=\mathbf R_{\rm tilt}\,\mathbf R_{\rm b}^{\rm e}$.

Based on the attitude error $\mathbf z_\omega = \tfrac12\operatorname{vex}\big(\mathbf R_{\rm d}^{\rm T}\mathbf R_{\rm b}^{\rm e}-{\mathbf R_{\rm b}^{\rm e}}^{\rm T}\mathbf R_{\rm d}\big)$, define
\begin{equation}
\label{equ-L_5}
L_5 = \Psi(\mathbf R_{\rm d}, \mathbf R_{\rm b}^{\rm e})
    = \tfrac{1}{4}\big\| \mathbf I - \mathbf R_{\rm d}^{\rm T}\mathbf R_{\rm b}^{\rm e} \big\|_{\rm F}^2.
\end{equation}
Then, design inner-loop feedback that yields $\dot L_5\le 0$ as
\begin{equation}
\label{equ-omega2}
{}^{\rm b}\boldsymbol\omega_2 = -\,c_\omega\,\mathbf z_\omega,\qquad c_\omega>0,
\end{equation}
which gives
\begin{equation}
\label{equ-L_5_dot}
\dot L_5 = -\,c_\omega\,\mathbf z_\omega^{\rm T}\mathbf z_\omega \;\le\; 0.
\end{equation}
\endgroup

Combining this with the LOS-convergence term \({}^{\rm b}\boldsymbol\omega_1\) from Eq.~\eqref{equ-omega_1} and applying actuator saturation to obtain the commanded signals yields:
\begin{equation}
    \label{equ-ControlLaw}
    \left\{
    \begin{aligned}
    f_{\rm rd} &= \|{}^{\rm e}\mathbf f_{\rm rd}\| \\
    {}^{\rm b}\boldsymbol\omega_{\rm d} &= {}^{\rm b}\boldsymbol\omega_1+{}^{\rm b}\boldsymbol\omega_2
    \end{aligned}
    \right..
\end{equation}

\subsection{Stability Analysis}
\label{sec-StabilityAnalysis}

This subsection formalizes the closed-loop safety and convergence properties of the proposed PS-LOS guidance and two-layer controller. The theorem below establishes sector invariance and asymptotic convergence of the relative position.
\begin{theorem} [Constraint Satisfaction and Convergence]
  \label{theorem1}
If $\big|\mathbf{n}_{\rm hd}^{\rm T}\mathbf{n}_{\rm t}(0)\big| \le c_h$, then under the control law in \eqref{equ-ControlLaw}, the closed-loop trajectory remains in $\mathcal{S}_{\rm {PS}}$ and $\mathbf{p}_{\rm r}\to \mathbf{0}$ as $t\to\infty$.
\end{theorem}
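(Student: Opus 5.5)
We will prove the claim with the composite Lyapunov function
\[
L = L_4 + L_3 + L_5 ,
\]
which is the function referred to by Eq.~\eqref{equ-L_dot}. The argument has four steps: compute the LOS dynamics, differentiate each term, bound the cross terms, and close with a barrier/LaSalle argument.

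\textbf{Step 1: LOS kinematics.} Write $\mathbf n_{\rm t}=-\mathbf p_{\rm r}/\|\mathbf p_{\rm r}\|$. Its derivative is
\[
\dot{\mathbf n}_{\rm t}=-\frac{1}{\|\mathbf p_{\rm r}\|}\big(I-\mathbf n_{\rm t}\mathbf n_{\rm t}^{\rm T}\big)\mathbf v_{\rm r}.
\]
Because $\mathbf n_{\rm hd},\mathbf n_{\rm vd}$ are fixed in CF, their EFF representations rotate with $\mathbf R_{\rm b}^{\rm e}\mathbf R_{\rm c}^{\rm b}$. Differentiating $z_1$ (on $z_1\neq0$, using $\operatorname{sgn}$) and $z_2$ therefore gives two contributions: a translational one through $\mathbf v_{\rm r}$, and a rotational one of the form $({}^{\rm e}\boldsymbol\omega)^{\rm T}(\mathbf n_{\rm t}\times\mathbf n_{\rm hd})$, and likewise with $\mathbf n_{\rm vd}$.

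Substituting $\mathbf v_{\rm r}=\mathbf z_4-c_1\mathbf p_{\rm r}$, the term $-c_1\mathbf p_{\rm r}$ is parallel to $\mathbf n_{\rm t}$ and is annihilated by the projector. So $\dot L_1+\dot L_2$ reduces to:
\begin{itemize}
\item cross terms $-\frac{K_h}{\|\mathbf p_{\rm r}\|}\mathbf n_{\rm hd}^{\rm T}(I-\mathbf n_{\rm t}\mathbf n_{\rm t}^{\rm T})\mathbf z_4$, and the analogous $K_v$ term;
\item rotational terms $K_h(\cdot)^{\rm T}\boldsymbol\omega+K_v(\cdot)^{\rm T}\boldsymbol\omega$.
\end{itemize}
With $\boldsymbol\omega={}^{\rm b}\boldsymbol\omega_1+{}^{\rm b}\boldsymbol\omega_2$ and \eqref{equ-omega_1}, the $\boldsymbol\omega_1$ part gives
\[
-c_\omega\big\|K_h(\mathbf n_{\rm t}\times\mathbf n_{\rm hd})+K_v(\mathbf n_{\rm t}\times\mathbf n_{\rm vd})\big\|^2\le0,
\]
up to the sign convention for the frame rotation, which we fix so that this holds.

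\textbf{Step 2: the $\mathbf z_4$ channel.} We have
\[
\dot{\mathbf z}_4=\frac1m{}^{\rm e}\mathbf f-{}^{\rm e}\mathbf a_{\rm t}+c_1\mathbf v_{\rm r}.
\]
Assume first perfect attitude tracking, ${}^{\rm e}\mathbf f={}^{\rm e}\mathbf f_{\rm rd}+$ aero $+$ gravity. Then \eqref{equ-alpha1} cancels the aerodynamic and gravity terms and cancels $c_1\mathbf v_{\rm r}$. It also injects exactly $+\frac{K_h}{\|\mathbf p_{\rm r}\|}(I-\mathbf n_{\rm t}\mathbf n_{\rm t}^{\rm T})\mathbf n_{\rm hd}$ (and the $K_v$ analogue), which cancel the Step~1 cross terms in $\mathbf z_4^{\rm T}\dot{\mathbf z}_4$. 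What remains is $-c_2\|\mathbf z_4\|^2-\mathbf z_4^{\rm T}\mathbf p_{\rm r}-\mathbf z_4^{\rm T}\mathbf a_{\rm t}$.

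The term $-\mathbf z_4^{\rm T}\mathbf p_{\rm r}$ cancels the cross term in
\[
\dot L_3=\mathbf p_{\rm r}^{\rm T}\mathbf v_{\rm r}=-c_1\|\mathbf p_{\rm r}\|^2+\mathbf p_{\rm r}^{\rm T}\mathbf z_4 .
\]
Next, the thrust-direction mismatch $f_{\rm rd}(\mathbf n_{\rm f}-\mathbf n_{\rm fd})/m$ is bounded by $k\|\mathbf z_\omega\|$. We absorb it with Young's inequality against $-c_\omega\|\mathbf z_\omega\|^2$ from \eqref{equ-L_5_dot}, scaling $L_5$ by a large weight if needed; this is the cascade argument.

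\textbf{Step 3: main obstacle, the target acceleration.} The disturbance $\mathbf a_{\rm t}$ only gives ultimate boundedness, $\dot L\le -c_1\|\mathbf p_{\rm r}\|^2-\tfrac{c_2}{2}\|\mathbf z_4\|^2+\tfrac{a_{\max}^2}{2c_2}$, not $\mathbf p_{\rm r}\to0$. We handle this in two stages. First, we prove the result with $\mathbf a_{\rm t}$ treated as known or compensated, i.e.\ included in ${}^{\rm e}\mathbf f_{\rm rd}$ via the estimator, or assumed zero in the nominal analysis. That yields $\dot L\le -c_1\|\mathbf z_3\|^2-c_2\|\mathbf z_4\|^2-c_\omega\|\mathbf z_\omega\|^2-c_\omega\|\cdot\|^2\le0$, which is Eq.~\eqref{equ-L_dot}. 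Second, we note that with bounded $\mathbf a_{\rm t}$ the same inequality gives convergence to a ball of radius $O(a_{\max}/\sqrt{c_1c_2})$, which can be made smaller than $r_{\rm hit}$ by gain choice, so $T^\star<\infty$.

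A second delicate point is the singularity $1/\|\mathbf p_{\rm r}\|$ as $\mathbf p_{\rm r}\to0$. We argue on $[0,T^\star)$, where $\|\mathbf p_{\rm r}\|\ge r_{\rm hit}$, so all terms stay bounded.

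\textbf{Step 4: conclusion.} Since $L(t)\le L(0)<\infty$ and $L_1\to\infty$ as $|z_1|\to c_h$, the barrier property gives $|z_1(t)|<c_h$ for all $t$; this is sector invariance, i.e.\ the trajectory remains in $\mathcal S_{\rm PS}$. Boundedness of all signals then makes $\ddot L$ bounded. Barbalat's lemma applied to the integrable $\|\mathbf z_3\|^2$, $\|\mathbf z_4\|^2$ and $\|\mathbf z_\omega\|^2$ gives $\mathbf p_{\rm r}\to\mathbf 0$. Finally, $z_2\to0$ follows from the vanishing of the $K_v$ rotational dissipation term, again via Barbalat.
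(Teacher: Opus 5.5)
Your skeleton is the paper's: a barrier-Lyapunov composite function, substitution of \eqref{equ-alpha1}, barrier invariance, then Barbalat. The step you add to make it rigorous fails, however. That step absorbs the attitude error against $-c_\omega\|\mathbf z_\omega\|^2$ from \eqref{equ-L_5_dot}, with a large weight on $L_5$. It fails because you let $\boldsymbol\omega_1$ act only on $L_1+L_2$ and $\boldsymbol\omega_2$ only on $L_5$, whereas the vehicle turns at $\boldsymbol\omega_1+\boldsymbol\omega_2$. Put $\mathbf w:=K_h(\mathbf n_{\rm t}\times\mathbf n_{\rm hd})+K_v(\mathbf n_{\rm t}\times\mathbf n_{\rm vd})$, with $K_h=\mathbf n_{\rm hd}^{\rm T}\mathbf n_{\rm t}/(c_h^2-z_1^2)$ signed. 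Then:
\begin{itemize}
\item the rotational part of $\dot L_1+\dot L_2$ is $-({}^{\rm e}\boldsymbol\omega)^{\rm T}\mathbf w$;
\item for $\mathbf R_{\rm d}=\mathbf R_{\rm tilt}\mathbf R_{\rm b}^{\rm e}$ one finds $L_5=1-\mathbf n_{\rm f}^{\rm T}\mathbf n_{\rm fd}$, $\mathbf R_{\rm b}^{\rm e}\mathbf z_\omega=-\mathbf n_{\rm f}\times\mathbf n_{\rm fd}$ and $\dot L_5=\mathbf z_\omega^{\rm T}\,{}^{\rm b}\boldsymbol\omega-\mathbf n_{\rm f}^{\rm T}\dot{\mathbf n}_{\rm fd}$;
\item the commanded rate is ${}^{\rm e}\boldsymbol\omega=c_\omega(\mathbf w-\mathbf R_{\rm b}^{\rm e}\mathbf z_\omega)$.
\end{itemize}
Hence the rotational part of $\dot L_1+\dot L_2+\lambda\dot L_5$ is
\[
-c_\omega\|\mathbf w\|^2-\lambda c_\omega\|\mathbf z_\omega\|^2+(1+\lambda)\,c_\omega(\mathbf R_{\rm b}^{\rm e}\mathbf z_\omega)^{\rm T}\mathbf w-\lambda\,\mathbf n_{\rm f}^{\rm T}\dot{\mathbf n}_{\rm fd}.
\]
Its quadratic part has determinant $-c_\omega^2(1-\lambda)^2/4$. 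It is indefinite for every $\lambda\neq1$, and for $\lambda=1$ it equals $-\|{}^{\rm e}\boldsymbol\omega\|^2/c_\omega$. The rate law is a gradient flow of $L_1+L_2+L_5$ over attitude, so only the combined gradient is dissipated.

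There is thus no $-\|\mathbf z_\omega\|^2$ margin. The attitude can rest at a compromise $\mathbf w=\mathbf R_{\rm b}^{\rm e}\mathbf z_\omega\neq\mathbf 0$ between pointing the camera and pointing the thrust (four conditions, three rotational degrees of freedom). Nothing in $\dot L$ then dominates the thrust error $f_{\rm rd}(\mathbf n_{\rm f}-\mathbf n_{\rm fd})/m$ entering $\dot{\mathbf z}_4$, nor the feed-forward term $\mathbf n_{\rm f}^{\rm T}\dot{\mathbf n}_{\rm fd}$. Both have gains containing $K_h/\|\mathbf p_{\rm r}\|$, which are not uniformly bounded. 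For the same reason your $z_2\to0$ argument, which needs $\mathbf w$ to be dissipated on its own, does not go through. Neither your final inequality nor \eqref{equ-L_dot} is therefore established; the paper asserts the latter and defers the details to prior work. Closing this needs a modified rate law or an explicit time-scale-separation or small-gain argument, not just careful bookkeeping.

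Your other departures are corrections the paper needs.
\begin{itemize}
\item $L_3$ must be in $L$. With the paper's $L=L_4+L_5$, neither the $-c_1\|\mathbf p_{\rm r}\|^2$ term of \eqref{equ-L_dot} nor the cancellation of $-\mathbf z_4^{\rm T}\mathbf p_{\rm r}$ is available.
\item Your Step 2 cancellations hold only for the sign-corrected command ${}^{\rm e}\mathbf f_{\rm rd}=-\mathbf R_{\rm l}^{\rm e}\,{}^{\rm l}\mathbf f_{\rm a}-{}^{\rm e}\mathbf f_g+m\big(-c_1\mathbf v_{\rm r}-c_2\mathbf z_4-\mathbf p_{\rm r}+\tfrac{K_h}{\|\mathbf p_{\rm r}\|}(I-\mathbf n_{\rm t}\mathbf n_{\rm t}^{\rm T})\mathbf n_{\rm hd}+\tfrac{K_v}{\|\mathbf p_{\rm r}\|}(I-\mathbf n_{\rm t}\mathbf n_{\rm t}^{\rm T})\mathbf n_{\rm vd}\big)$. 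As printed, the outer $-m$ in \eqref{equ-alpha1} makes the $c_1$, $c_2$ and $\mathbf p_{\rm r}$ terms destabilizing, so state which law you analyze.
\item Step 3 is right that \eqref{equ-L_dot} requires $\mathbf a_{\rm t}\equiv\mathbf 0$, which the paper assumes tacitly. In the translational loop alone (perfect attitude tracking, target on the optical axis, constant $\mathbf a_{\rm t}=a_0\mathbf n_{\rm t}$ with $a_0>0$), the range obeys $\ddot r+(c_1+c_2)\dot r+(1+c_1c_2)r=a_0$ and settles at $a_0/(1+c_1c_2)>0$. So make $\mathbf a_{\rm t}\equiv\mathbf 0$ an explicit hypothesis; estimator-based compensation is not the law \eqref{equ-ControlLaw}.
\item For bounded $\mathbf a_{\rm t}$, granted your disturbed bound, simply note that $\dot L\le-c_1r_{\rm hit}^2+a_{\max}^2/(2c_2)=:-\delta<0$ on $[0,T^\star)$, so $T^\star\le L(0)/\delta$. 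Ultimate boundedness in a ball does not follow, since $\dot L$ does not dominate $L_1,L_2,L_5$.
\item Barbalat for $\mathbf p_{\rm r}\to\mathbf 0$ needs only $\tfrac{d}{dt}\|\mathbf p_{\rm r}\|^2=2\mathbf p_{\rm r}^{\rm T}\mathbf v_{\rm r}$ bounded, not $\ddot L$, which blows up as $\mathbf p_{\rm r}\to\mathbf 0$.
\item $|z_1|<c_h$ alone does not put the trajectory in $\mathcal S_{\rm PS}$, which also requires $|z_2|\le\sin\alpha_{\rm lat}$. A quadratic $L_2$ cannot enforce that; the paper shares this gap.
\end{itemize}
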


\noindent\textit{Proof.}
Consider the composite Lyapunov function $L=L_4+L_5$, $L_4$ and $L_5$ defined in  Eq.~\eqref{equ-L_4} and Eq.~\eqref{equ-L_5}.
Substituting Eq.~\eqref{equ-alpha1} into $\dot L_4$ and combining with Eq.~\eqref{equ-L_5_dot}, and using the body-rate command in Eq.~\eqref{equ-ControlLaw}, yield
\begin{equation}
  \label{equ-L_dot}
\dot L \le -\,c_1\,\mathbf{p}_{\rm r}^{\rm T}\mathbf{p}_{\rm r}
          -\,c_2\,\mathbf{z}_4^{\rm T}\mathbf{z}_4
          -\,c_\omega\,\mathbf{z}_\omega^{\rm T}\mathbf{z}_\omega \;\le\; 0,
\end{equation}
with positive gains $c_1,c_2,c_\omega$.
Hence $L$ is nonincreasing and bounded below, implying forward invariance of $\big|\mathbf{n}_{\rm hd}^{\rm T}\mathbf{n}_{\rm t}(t)\big| \le c_h$ and boundedness of the closed-loop signals.
By standard LaSalle/Barbalat arguments \cite{Khalil2002}, $\mathbf{z}_\omega\to \mathbf{0}$ and $\mathbf{z}_4\to \mathbf{0}$, and the translational error dynamics then imply $\mathbf{p}_{\rm r}\to \mathbf{0}$. This proves safety and convergence. The detailed expansion and proof process is similar to that in \cite{yangHighspeedInterceptionMulticopter2025} and is omitted here for brevity. \hfill$\blacksquare$

\subsection{Coordinated-Turn Compensation and Moment Control}
\label{sec-LwController}

With high flight speed, sideslip reduces aerodynamic efficiency for lifting-wing quadcopters. We therefore add a coordinated-turn correction that blends a desired yaw rate for sideslip compensation with the nominal attitude command. The coordinated yaw rate is $\dot\psi_{\rm ct}={(g\tan\phi)}/{V}$, where $V$ is vehicle speed. Then the coordinated-turn angular-rate correction is defined as
\begin{equation}
\Omega_{\rm ct}=\begin{bmatrix}0 & 0 & g\tan\phi/V\end{bmatrix}^{\rm T}.
\end{equation}

The coordinated-turn correction is weighted by a speed-dependent factor \cite{quanLiftingwingQuadcopterModeling2025}:
\begin{equation}
w=\operatorname{sat}\!\Big(\frac{V-V_{\min}}{V_{\max}-V_{\min}},0,1\Big),
\end{equation}
and the desired body angular-rate command is then given by
\begin{equation}
\label{equ-Omega_ct}
{}^{\rm b}\boldsymbol\omega_{\rm ct,d}={}^{\rm b}\boldsymbol\omega_{\rm d} + w\,\Omega_{\rm ct}\,.
\end{equation}

Finally, the moment command that includes coordination compensation is implemented as a saturated PID-style law\cite{quanLiftingwingQuadcopterModeling2025}:
\begin{equation}
\label{equ-m_d}
\begin{aligned}
{}^{\rm b}\mathbf m_{\rm rd} = &-\mathbf R_{\rm l}^{\rm b}\,{}^{\rm l}\mathbf m_{\rm a} + \mathbf J\big(-\mathbf K_{\rm p}({}^{\rm b}\boldsymbol\omega-{}^{\rm b}\boldsymbol\omega_{\rm ct,d})\\
&-\mathbf m_{\rm d,1}-\mathbf K_{\rm d}({}^{\rm b}\dot{\boldsymbol\omega}-{}^{\rm b}\dot{\boldsymbol\omega}_{\rm ct,d})\big),
\end{aligned}
\end{equation}
with ${{\bf{m}}_{\rm{d},\rm{l}}} =  {{\bf{K}}_{\rm{i}}} \int \left( {^{\rm{b}}}{\bf{\omega }} - {{}^{\rm b}{\bf{\omega }}_{\rm{ct}, \rm{d}}} \right) \, \mathrm{d}t$ the integral term and $\mathbf K_{\rm p},\mathbf K_{\rm i},\mathbf K_{\rm d}$ PID gains. Here \(\operatorname{sat}(\cdot)\) performs vector saturation as defined in \cite{quanLiftingwingQuadcopterModeling2025}. This formulation explicitly compensates for coordinated turn requirements and preserves aerodynamic efficiency during high-speed maneuvers.

\begin{algorithm}[htbp]
\caption{Interception Controller}
\label{alg:ControllerFlow}
\begin{algorithmic}[1]
\Require Image ${}^{\rm i}\bar{\mathbf{p}}$, velocity $V$, vehicle attitude
\Ensure Desired $f_{\rm rd}$, ${}^{\rm b}\mathbf{m}_{\rm rd}$
\State Initialize control parameters
\For{$t = 1$ \textbf{to} $T^\star$}
    \State Acquire ${}^{\rm i}\bar{\mathbf{p}}$, estimate $\hat{\mathbf{p}}_{\rm r}, \hat{\mathbf{v}}_{\rm r}$ via \textit{Algorithm~\ref{alg:DelayedFilter}}
    \State Compute $f_{\rm rd}, {\mathbf{\omega}}_{\rm d}$ (Eq.~\eqref{equ-ControlLaw})
    \If{$V \ge V_{\rm min}$} \State Compute ${}^{\rm b}\boldsymbol{\omega}_{\rm ct}$ (Eq.~\eqref{equ-Omega_ct}) \EndIf
    \State Compute ${}^{\rm b}\mathbf{m}_{\rm rd}$ (Eq.~\eqref{equ-m_d})
\EndFor
\end{algorithmic}
\end{algorithm}

\subsection{Delay-Compensated EKF}
\label{sec:estimation}

Onboard sensor and vision delays ($0.13\sim0.16 \, \rm{s}$) can induce large estimation errors for highly maneuvering targets (e.g., a $1.2 \, \rm{m}$ shift at $8 \, \rm{m/s}$), risking target loss during interception. To mitigate this issue, we adopt a delay-compensated Extended Kalman Filter (DC-EKF) that fuses high-rate IMU propagation with delayed monocular feature updates. The overall workflow is illustrated in Fig.~\ref{fig:EKF}, while the recursive filtering logic is summarized in \textit{Algorithm~\ref{alg:DelayedFilter}}.
\begin{figure}[htbp]
  \centering
  \includegraphics[width=0.8\columnwidth,keepaspectratio]{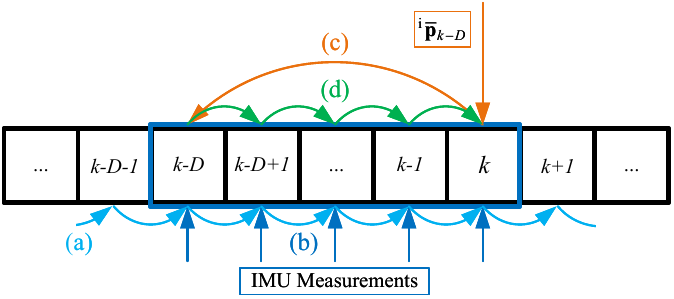}
  \caption{\label{fig:EKF}Workflow of the DC-EKF. (a) Nominal IMU propagation when no image features are available. (b) Maintaining an IMU data buffer for later use in delay compensation. (c) EKF update with delayed visual features at time \(t_{k-D}\). (d) Re-propagation of the corrected state from \(t_{k-D}\) to the current time \(t_k\) using the buffered IMU measurements.}
  \vspace{-0.2cm}
\end{figure}

\begin{figure*}[htbp]
  \centering
  \includegraphics[width=\textwidth]{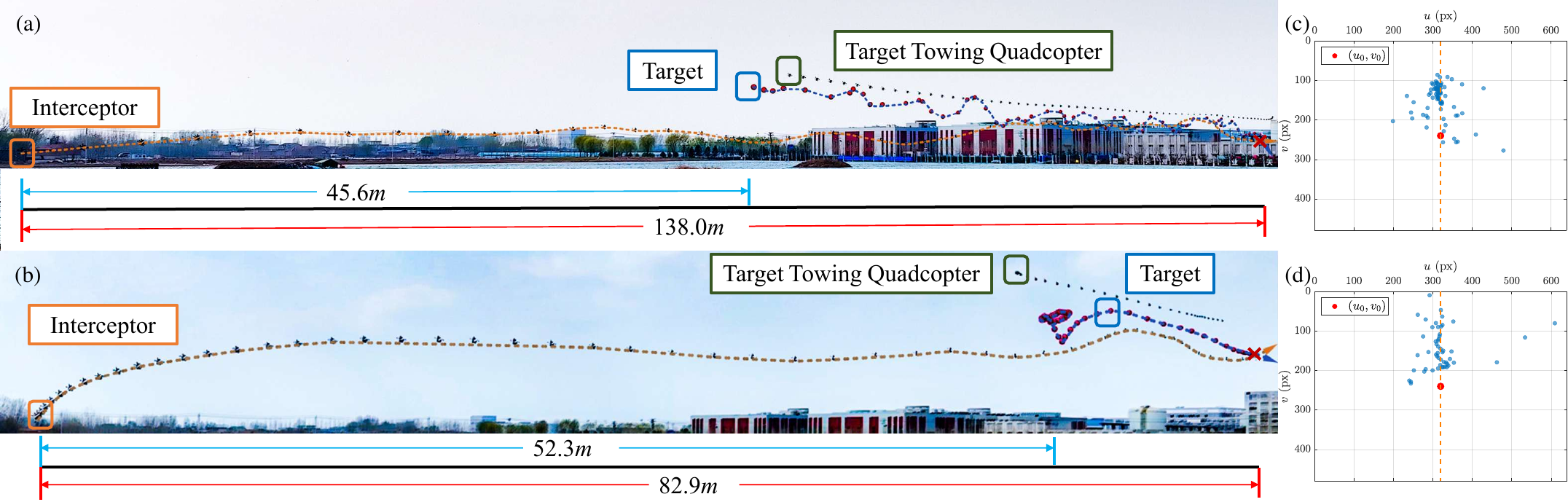}
  \\[2pt]
  \includegraphics[width=\textwidth]{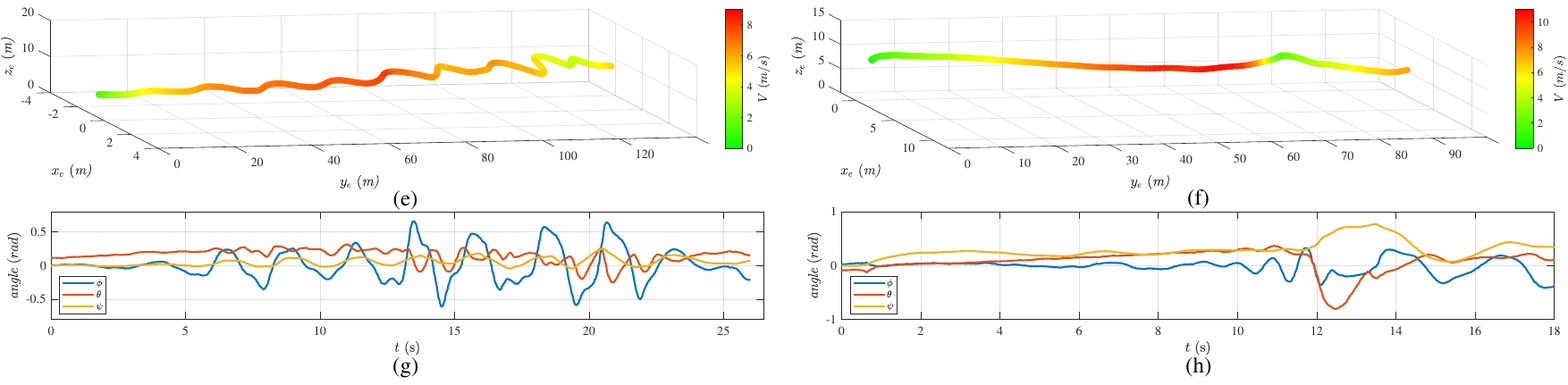}
  \caption{Experimental results: (a) Trajectories for the interception lasting 138\,m. (b) Trajectories for the interception lasting 89\,m. (c,d) image-feature. (e,f) Trajectory and velocity profiles for two trails (max 8.9 / 10.8\,m/s). (g,h) Attitude variation. }
  \label{fig:expFrames}
  \vspace{-0.4cm}
\end{figure*}

\noindent\textbf{Remark 3.} 
The proposed DC-EKF generalizes the DKF in \cite{yangHighspeedInterceptionMulticopter2025}. While both filters compensate for image-processing latency by re-propagating states from delayed updates, our DC-EKF (i) employs nonlinear state propagation (Eq.~\eqref{equ-stateTrans}) instead of a linearized model, (ii) introduces refined quaternion update rules (Eq.~\eqref{equ-delta_q}), and (iii) corrects several omissions in the state-transition equations. These refinements ensure stable estimation for highly maneuvering targets and lifting-wing quadcopter dynamics, while retaining the computational tractability of an EKF framework.

Specifically, the DC-EKF propagates the state using the following nonlinear dynamics:
\begin{equation}
\left\{
\begin{aligned}
    \hat{\mathbf{q}}_{k+1} &= \delta \mathbf{q} \otimes \mathbf{q}_k, \quad 
    \mathbf{q}_{k+1} = \frac{\hat{\mathbf{q}}_{k+1}}{\|\hat{\mathbf{q}}_{k+1}\|},\\
    \mathbf{p}_{{\rm r},k+1} &= \mathbf{p}_{{\rm r},k} + \mathbf{v}_{{\rm r},k} \Delta t + \tfrac{1}{2} {}^{\rm e}\mathbf{a}_k \Delta t^2,\\
    \mathbf{v}_{{\rm r},k+1} &= \mathbf{v}_{{\rm r},k} + {}^{\rm e}\mathbf{a}_k \Delta t,\\
    {}^{\rm i}\bar{\mathbf{p}}_{k+1} &= {}^{\rm i}\bar{\mathbf{p}}_k + \Delta t \, \mathbf{L}_s({}^{\rm i}\bar{\mathbf{p}}_k, \mathbf{R}_{\rm e}^{\rm c} \mathbf{p}_{{\rm r},k}) 
    \begin{bmatrix} \mathbf{v}_{{\rm r},k}^{\rm T} \mathbf{R}_{\rm c}^{\rm e} & \mathbf{\omega}_k^{\rm T} \mathbf{R}_{\rm c}^{\rm e} \end{bmatrix}^{\rm T},\\
    \mathbf{b}_{{\rm gyr},k+1} &= \mathbf{b}_{{\rm gyr},k} + \mathbf{n}_{\rm gyr} \Delta t,\\
    \mathbf{b}_{{\rm acc},k+1} &= \mathbf{b}_{{\rm acc},k} + \mathbf{n}_{\rm acc} \Delta t,
\end{aligned}
\right.
\label{equ-stateTrans}
\end{equation}
which serve as the prediction model within the EKF framework. Where
\begin{align}
    \delta {\bf q}
      = \left[
          \cos\biggl(\frac{\|{\bf\omega}_k\|\Delta t}{2}\biggr)\;\;
          \frac{{\bf\omega}_k}{\|{\bf \omega}_k\|}
          \sin\biggl(\frac{\|{\bf \omega}_k\|\Delta t}{2}\biggr)
        \right]^{\rm T}.
        \label{equ-delta_q}
\end{align}

\begin{algorithm}[htbp]
\caption{DC-EKF Pseudocode}
\label{alg:DelayedFilter}
\begin{algorithmic}[1]
\Require Image features ${}^{\rm i}\bar{\mathbf{p}}$, IMU rates ${}^{\rm e}\boldsymbol\omega$, accelerations ${}^{\rm e}\mathbf{a}$
\Ensure Estimated states $\hat{\mathbf{x}}_t$
\State Initialize $\mathbf{x}_0$, $\mathbf{P}_0$, and IMU buffer
\For{$t = 1$ \textbf{to} $T^\star$}
    \If{no image features}
        \State Propagate state (Eq.~\eqref{equ-stateTrans}) and covariance; buffer IMU data
    \Else
        \State Delayed update at $t-D$
        \State Re-propagate from $t-D+1$ to $t$ using buffered IMU
    \EndIf
\EndFor
\end{algorithmic}
\end{algorithm}

\section{Lifting-Wing Interception Experiment\label{sec:experiments}}

\subsection{Experimental Setup}
The platform (Fig.~\ref{fig:expParameters}) is a lifting-wing quadcopter equipped with a fixed monocular camera (640$\times$480 px, 120$^{\circ}$ FOV) and onboard real-time processing. Target features were extracted using YOLOv5s\footnote{YOLOv5s: Ultralytics, available at \url{https://github.com/ultralytics/yolov5}} at 20--25 Hz. All perception, estimation, and high-level control algorithms run fully onboard an NVIDIA Jetson Orin NX (16 GB), enabling real-time, self-contained operation without offboard computation.

Experiments involved intercepting a balloon suspended from a quadrotor (Fig.~\ref{fig:expFrames}(a,b)) in outdoor wind conditions of 3--8\,m/s (Beaufort 2--5), where the balloon exhibited rapid, random lateral and vertical maneuvers under the combined influence of wind and the towing quadrotor, \textbf{including motions beyond the quadrotor's own capability}. Two scenarios were tested:

\textbf{Static:} The towing quadrotor hovered while the balloon oscillated in the wind.

\textbf{Dynamic:} The towing quadrotor was manually flown at 6--8\,m/s along arbitrary escape trajectories, adding both motion and aerodynamic disturbances.

Multiple trials were conducted with varying initial separations to ensure statistical reliability (see Table~\ref{tab:interception}).

\subsection{DC-EKF Performance}

Fig.~\ref{fig:expEKF} shows the effectiveness of the DC-EKF filter in the experiment. The figure demonstrates that, when observing a dynamic target, the filter leverages the IMU to accurately predict the target position approximately $0.1\sim0.2\,\rm{s}$ before the visual feature measurements arrive (this value is adjustable depending on the hardware used in this work), thereby effectively compensating for the measurement delay. The filter also addresses frame drops by maintaining continuity of the target position estimate when image features are temporarily lost, ensuring real-time and accurate target position estimation during agile motion.
\vspace{-0.2cm}

\begin{figure}[htbp]
  \centering
  \includegraphics[width=1.0\columnwidth,keepaspectratio]{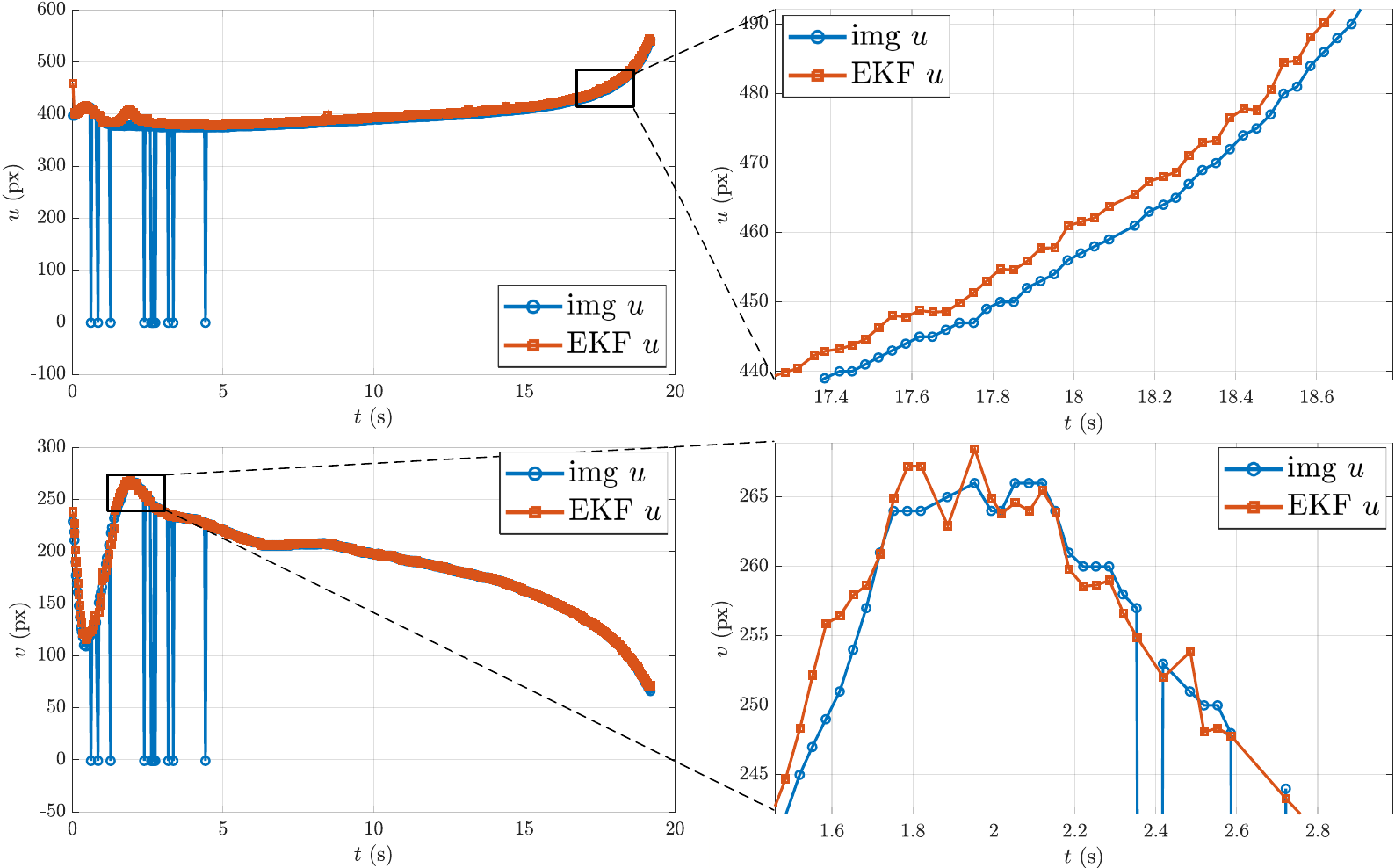}
  \caption{\label{fig:expEKF}DC-EKF performance. The plot shows the temporal evolution of the horizontal and vertical coordinates of the target image features in the IF during an interception, both before and after filtering.}
  \vspace{-0.7cm}
\end{figure}

\subsection{Experimental Data Analysis}

The experimental results of the trials demonstrate a 90\% interception success rate for static targets and a 71\% success rate for dynamic targets (see Table~\ref{tab:interception}).

Notably, in two trials the interceptor successfully achieved precise interceptions of unpredictable dynamic targets exhibiting high-frequency, large-amplitude oscillations at distances of 89\,m and 138\,m (Fig.~\ref{fig:expFrames}(a,b)), thoroughly validating the robustness of the algorithm. Fig.~\ref{fig:expFrames}(e,f) present the velocity-trajectory plots for two interception missions, demonstrating maximum interception velocities of 8.9\,m/s and 10.8\,m/s, respectively. Fig.~\ref{fig:expFrames}(g,h) illustrate the attitude variations during these two interception scenarios, while Fig.~\ref{fig:expFrames}(c,d) depict the distribution of target features within the FOV throughout the interception processes. Collectively, these six subfigures demonstrate that when the target maneuvers, the lifting-wing quadcopter interceptor rapidly executes maneuvering adjustments and achieves swift acceleration to accomplish the subsequent tracking and interception. The controller effectively maintains the unpredictable agile target near the PS-LOS region, ensuring accurate and continuous target tracking.

\subsection{Performance Comparison}
The controller proposed in this work represents an enhancement over the methodology described in \cite{yangHighspeedInterceptionMulticopter2025}. Compared to the previous work, this approach demonstrates significant improvements in interception range and achieves accurate interception capabilities under unpredictable maneuverable target and wind gusting conditions. The detailed comparison is presented in Table~\ref{tab:comparison}.

Compared to \cite{yangHighspeedInterceptionMulticopter2025}, our work yields substantially greater interception range (138 m v.s. 55 m) and improved robustness to target agility and wind gusts (Table~\ref{tab:comparison}).

\begin{table}[htbp]
    \centering
    \caption{Interception Performance}
    \label{tab:interception}
    \small
    \setlength{\extrarowheight}{3pt}
    \setlength{\tabcolsep}{6pt}
    \begin{tabular}{@{} p{1.2cm} p{1.6cm} p{1.0cm} p{1.2cm} p{1.3cm} @{}}
        \toprule
        \multirow{2}{1.2cm}{\parbox{1.2cm}{Towing Quadcopter}}
        & \multirow{2}{1.6cm}{\parbox{1.6cm}{Interception Distance}}
        & \multirow{2}{1.0cm}{\parbox{1.0cm}{Number of Trials}}
        & \multicolumn{2}{c}{Successful Interceptions} \\
        \cmidrule(lr){4-5}
         & & & Previous Work \cite{yangHighspeedInterceptionMulticopter2025} & \textbf{Proposed} \\
        \midrule
        \multirow{2}{*}{Static}
          & 20--30\,m & 5 & 5 (100\%) & \textbf{5 (100\%)} \\
          & 30--50\,m & 5 & 4 (80\%) & \textbf{4 (80\%)} \\
        \midrule
        \multirow{3}{*}{Dynamic}
          & 30--50\,m & 8 & 4 (50\%) & \textbf{7 (87.5\%)} \\
          & 50--70\,m & 8 & 2 (25\%) & \textbf{6 (75\%)} \\
          & $\geq$ 70\,m & 5 & - & \textbf{3 (60\%)} \\
        \bottomrule
    \end{tabular}
    \vspace{-0.2cm}
\end{table}

\begin{table}[htbp]
    \centering
    \caption{Comparison of Dynamic Target Interception Experiments}
    \label{tab:comparison}
    \small
    \setlength{\extrarowheight}{3pt}
    \setlength{\tabcolsep}{5pt}
    \begin{tabular}{@{} >{\raggedright\arraybackslash}m{2.5cm} 
                        >{\raggedright\arraybackslash}m{2.5cm} 
                        >{\raggedright\arraybackslash}m{2.6cm} @{} }
        \toprule
        Experimental \newline Condition & Previous Work \cite{yangHighspeedInterceptionMulticopter2025} & \textbf{Proposed} \\
        \midrule
        Maximum Interception Distance & 55\,m & \textbf{138\,m} \\
        Towing Quadcopter Speed & 5--6\,m/s \newline (nearly constant) & \textbf{6--8\,m/s \newline (variable)} \\
        Target Agility & Minimal & \textbf{unpredictable and \newline significant} \\
        Wind Gusts & Negligible & \textbf{Beaufort 2--5} \\
        \bottomrule
    \end{tabular}
\end{table}

\vspace{-0.3cm}
\section{Conclusion}
\label{sec:conclusion}

This paper introduced a \textit{Planer-Sector Line-of-Sight} guidance method for long-range interception of agile targets using lifting-wing quadcopters. By replacing conventional conic FOV constraints with a planar-sector bound aligned with the camera symmetry plane, PS-LOS significantly enlarges feasible acceleration with reliable target tracking.

For lifting-wing quadcopter interception platform, we designed a \textit{two-layer control architecture}: an outer thrust/LOS law enforcing the sector through barrier terms, and an inner attitude tracker with coordinated-turn compensation. Theoretical analysis using a composite Lyapunov function guarantees \textit{sector invariance and asymptotic convergence} of the relative position. To handle delayed and intermittent monocular sensing, a delay-compensated EKF fuses high-rate IMU propagation with visual updates, providing low-latency target-state estimates.

\textit{Experimental validation} demonstrated persistent interceptions up to 138\,m under irregular target maneuvers and Beaufort 2--5 winds, achieving 90\% success in static and 71\% in dynamic scenarios. Compared with quadrotor-based IBVS baselines, our approach substantially extends range and robustness.

Future work will focus on enabling longer-horizon and more accurate target-state prediction, coupled with a more systematic pursuit planning approach.

\bibliographystyle{IEEEtran}
\bibliography{LwAtt}

\end{document}